\newcolumntype{L}[1]{>{\raggedright\let\newline\\\arraybackslash\hspace{0pt}}m{#1}}
\newcolumntype{C}[1]{>{\centering\let\newline\\\arraybackslash\hspace{0pt}}m{#1}}
\newcolumntype{R}[1]{>{\raggedleft\let\newline\\\arraybackslash\hspace{0pt}}m{#1}}
\title{Exploiting Data Sparsity in Secure Cross-Platform Social Recommendation}
\newtheorem{theorem}{Theorem}
\newtheorem{lemma}[theorem]{Lemma}
\newcommand{\nosection}[1]{\vspace{2pt}\noindent\textbf{#1.}}
\newcommand{\modelname}{S$^3$Rec}
\author[1]{\textbf{Jamie Cui}}
\author[2,1*]{\textbf{Chaochao Chen}}
\author[3]{\textbf{Lingjuan Lyu}}
\author[4]{\textbf{Carl Yang}}
\author[1]{\textbf{Li Wang}}
\affil[1]{Ant Group}
\affil[2]{Zhejiang University}
\affil[3]{Sony AI}
\affil[4]{Emory University}
\affil[*]{Corresponding author, email:~\texttt{zjuccc@zju.edu.cn}}
\begin{document}

\maketitle

\begin{abstract}
Social recommendation has shown promising improvements over traditional systems since it leverages social correlation data as an additional input.
Most existing works assume that all data are available to the recommendation platform. 
However, in practice, user-item interaction data (e.g., rating) and user-user social data are usually generated by different platforms, both of which contain sensitive information. 
Therefore, \emph{How to perform secure and efficient social recommendation across different platforms, where the data are highly-sparse in nature} remains an important challenge. 
In this work, we bring secure computation techniques into social recommendation, and propose S$^3$Rec, a sparsity-aware secure cross-platform social recommendation framework. 
As a result, \modelname~can not only improve the recommendation performance of
the rating platform by incorporating the sparse social data on the social platform, but also protect data privacy of both platforms.
Moreover, to further improve model training efficiency, we propose two secure sparse matrix multiplication protocols based on homomorphic encryption and private information retrieval.
Our experiments on two benchmark datasets demonstrate %the effectiveness of \modelname.
that \modelname~improves the computation time and communication size of the state-of-the-art model by about \textbf{$40\times$} and \textbf{$423\times$} in average, respectively. 
\end{abstract}

\section{Introduction}\label{sec:intro}
%%%%%%%%%%%%%%%%%%%%%%%%%%%%%%%%%%%%%%%%%%%%%%%%%%%%%%%%%%%%%%%%%%%%%%%%%%%%%%%%

The recent advances of social recommendation have achieved remarkable performances in recommendation tasks \cite{Fan2019GraphNN,Tang2016RecommendationWS}.
Unlike traditional methods, social recommendation leverages user-item rating data (e.g. from Netflix) with user-user social data (e.g. from Facebook) to facilitate model training.
The intuition behind this setup is that Facebook's social data is much better than Netflix's social data in both quantity and quality, and those social data at Facebook can help to improve Netflix's recommendation performance.
However, the cross-platform nature, the high sparsity and sensitivity of recommendation/social data make social recommendation hard-to-deploy in the real world \cite{chen2019secure}.
In summary, the main problem we are facing is,

\emph{How to perform \textbf{secure} and \textbf{efficient} social recommendation across different platforms, where the data are \textbf{highly-sparse} in nature?}

Specifically, we focus on the problem of collaborative social recommendation in the two-party model, where one party (denoted as $P_0$) is a rating platform that holds user-item rating data, and the other party (denoted as $P_1$) is a social platform that holds user-user social data.
We also assume that the adversaries are semi-honest, which is commonly used in the secure computation literature \cite{damgaard2012multiparty}.
That is to say, the adversary will not deviate from the pre-defined protocol, but will try to learn as much information as possible from its received messages.

\nosection{Choices of privacy enhancing techniques}
% 
%On the one hand
Currently, many anonymization techniques have been used in publishing recommendation data, such as \emph{k-anonymity} and \emph{differential privacy} \cite{Dwork2014TheAF}.
On the other hand, cryptographic methods like \emph{secure multiparty computation} (MPC) \cite{Dwork2014TheAF} and \emph{homomorphic encryption} (HE) have been proposed to enable calculation on the protected data.
Since k-anonymity has been demonstrated risky in practice (e.g., the re-identification attack on Netflix Prize dataset \cite{Narayanan2006HowTB}), and differential privacy introduces random noises to the dataset which eventually affects model accuracy \cite{Dwork2006CalibratingNT, yang2021secure}, we consider they are not the ideal choice for our framework.
Instead, we choose a combination of cryptographic tools (i.e., MPC and HE, but mainly MPC) which allows multiple parties to jointly compute a function depending on their private inputs while providing security guarantees.
% 

% 
% However, k-anonymity have been demonstrated risky in practice (e.g. the re-identification attack on Netflix Prize dataset \cite{Narayanan2006HowTB}).
% %
% Though other anonymization techniques like \emph{differential privacy} \cite{Dwork2014TheAF} overcomes the above vulnerability by introducing carefully-designed noises, but this inevitably affects model accuracy.
% % 
% In this work, we choose a cryptographic technique called \emph{secure multiparty computation} (MPC) \cite{Dwork2014TheAF}.
% % 
% Roughly, an MPC protocol allows multiple parties to jointly compute a function depending on their private inputs while ensuring that parties learn nothing else apart from their inputs and function results. 
% % 
% MPC has been used on many ML algorithms such as linear regression \cite{Nikolaenko2013PrivacyPreservingRR}, logistic regression \cite{Mohassel2017SecureMLAS}, decision tree \cite{Lindell2001PrivacyPD}, etc..

\nosection{Choices of social recommendation model}
In literature, many social recommendation models have been proposed \cite{chen2019efficient,ma2011recommender,tang2013social} using matrix factorization or neural networks.
Existing MPC-based neural network protocols \cite{Mohassel2017SecureMLAS, Wagh2019SecureNN3S} usually suffer from accuracy loss and inefficiency due to their approximation of non-linear operations. 
% 
% \ccc{somehow weak, not solid. can we claim they also cause accuracy loss due to approximations of non-linear functions?}
% 
% At a high level, the communication overhead of MPC-based secure ML frameworks \cite{Mohassel2017SecureMLAS, Wagh2019SecureNN3S} has been the obstacle of its further advances.
% 
Especially for the case of social recommendation, training data could exceed to millions, and this makes NN-based model a less ideal choice.
Therefore, we choose the classic social recommendation model, Soreg \cite{ma2011recommender}, as a typical example, and present how to build a secure and efficient version of Soreg under cross-domain social recommendation scenario.

\nosection{Dealing with sparse data in secure machine learning}
One important property of social recommendation data is its high sparsity.
Take LibraryThing dataset \cite{zhao2015improving} for example, its social matrix density is less than 0.02\%.
Recently, Schoppmann et al. introduced the ROOM framework \cite{Schoppmann2019MakeSR} for secure computation over sparse data. 
However, their solution only works on column-sparse or row-sparse data, and in addition, it requires secure matrix multiplication protocol (for instance, based on Beaver's multiplication triple).
Chen et al. proposed a secure protocol for a sparse matrix multiplies a dense matrix \cite{hesslr}, which combines homomorphic encryption and secret sharing, but it only works well when the dense matrix is small. 
% 
% We take an integration of the ROOM functionality together with the downside MPC calculation (i.e. the multiplication in ours case). \ccc{revise, this sentence looks like we improve ROOM, rather than disgard it}
% 
% In this work, we proposed a PIR-based matrix multiplication which does not .
%
Different from their work, in this paper, we propose a PIR-based matrix multiplication which does not reply on pre-generated correlated randomness.

\nosection{Our framework}
In this paper, we propose \modelname, a sparsity-aware secure cross-platform social recommendation framework. 
Starting with the classic Soreg model, we observe that the training process of Soreg involves two types of calculation terms:
(1) the \textit{rating term} which could be calculated by $P_0$ locally, and (2) the \textit{social term} which needs to be calculated by $P_0$ and $P_1$ collaboratively.
Therefore, the key to \modelname~is designing secure and efficient protocols for calculating the social term.

To begin with, we first let both parties perform local calculation.
Then both parties invoke a secure social term calculation protocol and let $P_0$ finally receive the plaintext social term, and update the model accordingly.
In this way, the security of our protocol relies significantly on the secure social term calculation protocol (for simplicity, we refer this protocol as the `ST-MPC' protocol), and we propose a secure instantiation and prove its security.
% and the leakage incurred by revealing social term to $P_0$. \ccc{don't think it is necessary to directly claim leakage here}
% 
Similarly, the efficiency of \modelname~relies heavily on the performance of ST-MPC, and at the core, it relies on the efficiency of a matrix multiplication protocol.
% \ccc{revise the above two sentences in this way. In this way, the security and efficiency of \modelname~relies significantly on the secure social term calculation protocol (for simplicity, we refer this protocol as the `ST-MPC' protocol). %
% For security, we ...
% For efficiency, we ...
% }
% 
The na\"ive secure matrix multiplication protocol is traditionally evaluated through Beaver's triples \cite{beaver1991efficient}, and has $O(km^2)$ asymptotic communication complexity, where $k$ is the dimension of latent factors and $m$ is the number of users. 
To improve the communication efficiency, we propose two secure sparse matrix multiplication protocols for ST-MPC, based on two sparsity settings:
(1) \emph{insensitive sparsity}, which is a weaker variant of matrix multiplication where we assume both parties know the locations of non-zero values in the sparse matrix, and 
(2) \emph{sensitive sparsity}, which is also a weaker variant of matrix multiplication, but stronger than (1), and we assume `only' the number of zeros is public. Nevertheless, we present secure constructions for $\mathsf{MatrixMul}$ in both cases by leveraging two cryptography primitives called \emph{Private Information Retrieval} (PIR) \cite{Angel2018PIRWC} and \emph{Homomorphic Encryption} (HE) \cite{Paillier1999PublicKeyCB}.
PIR can hide the locations of the non-zero values in the sparse matrix while HE enables additions and multiplications on ciphertexts. 
To this end, we drop the communication complexity of secure  $\mathsf{MatrixMul}$ to $O(km)$ for the insensitive sparsity case and to $O(\alpha km)$ for the sensitive sparsity case, where $\alpha$ denotes the density of user social matrix.

\nosection{Summary of our experimental results}
We conduct experiments on two popularly used dataset, i.e., Epinions \cite{massa2007trust} and LibraryThing \cite{zhao2015improving}. 
The results demonstrate that (1) \modelname~achieves the same performance as existing social recommendation models, and (2) \modelname~improves the computation time and communication size of the state-of-the-art (SeSoRec) by about \textbf{$40\times$} and \textbf{$423\times$} in average.

% \nosection{Limitations} \sz{more}

\nosection{Contributions}
We summarize our main contributions below: 
(1) We propose \modelname, a privacy-preserving cross-platform social recommendation framework, which relies on a general protocol for calculating the social term securely; 
(2) We propose two secure sparse matrix multiplication protocols based on different sparsity visibility, i.e., insensitive sparsity and sensitive sparsity. We prove that both protocols are secure under semi-honest adversaries; 
and (3) We empirically evaluate the performance of \modelname~on benchmark datasets. 

%%%%%%%%%%%%%%%%%%%%%%%%%%%%%%%%%%%%%%%%%%%%%%%%%%%%%%%%%%%%%%%%%%%%%%%%%%%%%%%%

% Some key points that we need to focus on:
%   (1) We need to change the model name, as to emphasis the secure, sparse, for our framework
%   (2) We need to focus on the sparse
%   (3) Highlight the difference from existing work
%   (4) Why do we choose not to use NN for recommendation task?
%   (5) Highlight our theoretical proofs of security and recommendation (code stat?)
%%%%%%%%%%%%%%%%%%%%%%%%%%%%%%%%%%%%%%%%%%%%%%%%%%%%%%%%%%%%%%%%%%%%%%%%%%%%%%%%%%
\section{Tools and Recommendation Model}
%%%%%%%%%%%%%%%%%%%%%%%%%%%%%%%%%%%%%%%%%%%%%%%%%%%%%%%%%%%%%%%%%%%%%%%%%%%%%%%%%%

\nosection{Notation}
We use $[n]$ to denote the set $\set{1,...,n}$, and $|x|$ to denote the bit length of $x$.
In terms of MPC, we denote a secret shared value of $x$ in $\mathbb{Z}_N$ as $\llbracket x\rrbracket$, where $N$ is a positive integer.
Also, we let $\llbracket x\rrbracket_0$ denote $P_0$'s share, and $\llbracket x\rrbracket_1$ denote $P_1$'s share, where $\llbracket x\rrbracket = \llbracket x\rrbracket_0 + \llbracket x\rrbracket_1\in\mathbb{Z}_N$.
We also use $\gets$ to denote the assignment of variables, e.g., $x\gets 4$.

\subsection{Tools}

In this section, we introduce several secure computation tools used in our work.

\begin{wrapfigure}{r}{0.5\textwidth}
    \centering
    \input{protocols/simple_solution}
    \caption{Secure matrix multiplication protocol, where $\mathsf{Shr}$ is a secret sharing algorithm.}
    \label{fig:simple_solution}
    \vspace{-20pt}
\end{wrapfigure}

\nosection{Multi-Party Computation (MPC)}
MPC is a cryptographic tool which enables multiple parties (say, $n$ parties) to jointly compute a function $f(x_1, ..., x_n)$, where $x_i$ is $i$-th party's private input.
MPC protocols ensure that, at the end of the protocol, parties eventually learn nothing but their own input and the function output.
MPC has been widely-used in secure machine learning systems such as PrivColl \cite{Zhang2020PrivCollPP} and CrypTFlow \cite{Kumar2020CrypTFlowST}, most of which support a wide range of linear (e.g. addition, multiplication) and non-linear functions (e.g. equality test, comparison).
Here, we present three popular MPC protocols (addition, multiplication, and matrix multiplication), which we will use later in our protocol,
\begin{description}
    \item [$\mathsf{Add}(\llbracket x\rrbracket, \llbracket y\rrbracket)$:] Take two shares as inputs from both parties, $P_{b\in\bin}$ locally calculate and return $\llbracket x\rrbracket_b+\llbracket y\rrbracket_b$.
    
    \item [$\mathsf{Mul}(\llbracket x\rrbracket, \llbracket y\rrbracket)$:] Take two shares as inputs from both parties, then evaluate using
    Beaver's Triples \cite{beaver1991efficient}.
\end{description}

\nosection{Homomorphic Encryption (HE) scheme} 
HE is essentially a specific type of encryption scheme which allows manipulation on encrypted data.
More specifically, HE involves a key pair $(\pk, \sk)$, where the public key $\pk$ is used for encryption and the secret key $\sk$ is used for decryption. 
In this work, we use an additive HE scheme (i.e., Paillier \cite{Paillier1999PublicKeyCB}) which allows the following operations:
\begin{description}
    \item[$\enc_{\pk}(x)\oplus \enc_{\pk}(y)$:] addition between two ciphertexts, returns $z=\enc_{\pk}(x+y)$;
    \item[$\enc_{\pk}(x)\otimes y$:] multiplication between a ciphertext and a plaintext, returns $z=\enc_{\pk}(x\cdot y)$.
\end{description}

% \begin{wrapfigure}{r}{0.5\linewidth}
% \begin{pcvstack}[boxed, center, space=1em]
% \procedureblock[codesize=\normalsize]{}{
% \textbf{Client} \> \> \textbf{Server}\\
% q \gets \mathsf{PIR.Query}(i) \> \sendmessage{->}{length=20pt, top=$q$} \> \\
% \> \sendmessage{<-}{length=20pt, top=$r$} \> r \gets \mathsf{PIR.Response}(\mathsf{DB}, q)\\
% \mathsf{DB}_i\gets\mathsf{PIR.Extract}(r)
% }
% \end{pcvstack}
% \caption{An overview of Private Information Retrieval}
% \label{fig:pir}
% \end{wrapfigure}
% 
\nosection{Private Information Retrieval (PIR)}
Now, we introduce single-server PIR \cite{Angel2018PIRWC}.
In this setting, we assume there is a server and a client, where the server holds a database $\mathsf{DB}=\{d_1,...,d_n\}$ with $n$ elements, and the client wants to retrieve $\mathsf{DB}_i$ while hiding the query index $i$ from the server. 
Roughly, a PIR protocol consists of a tuple of algorithm $(\mathsf{PIR.Query}, \mathsf{PIR.Response}, \mathsf{PIR.Extract})$.
First, the client generates a query $q\gets\mathsf{PIR.Query}(i)$ from an index $i$, and then sends query $q$ to the server.
The server then is able to generate a response $r\gets\mathsf{PIR.Response}(\mathsf{DB}, q)$ based on the query and database $\mathsf{DB}$, and returns $r$ to the client.
Finally, the client extracts the result from server's response $\mathsf{DB}_i \gets \mathsf{PIR.Extract}(r)$.

\begin{figure}[h]
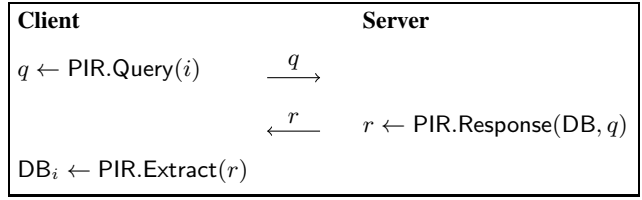

\begin{pcvstack}[boxed, center, space=1em]
\procedureblock[]{}{
\textbf{Client} \> \> \textbf{Server}\\
q \gets \mathsf{PIR.Query}(i) \> \sendmessage{->}{length=20pt, top=$q$} \> \\
\> \sendmessage{<-}{length=20pt, top=$r$} \> r \gets \mathsf{PIR.Response}(\mathsf{DB}, q)\\
\mathsf{DB}_i\gets\mathsf{PIR.Extract}(r)
}
\end{pcvstack}
\caption{An overview of Private Information Retrieval (PIR).}
\label{fig:pir}
\end{figure}

% \nosection{A note on the hybrid model} 
% % 
% Moreover, giving to the parties access to an ideal process that computes $\mathcal{F}$ functionality, is called $\mathcal{F}$-Hybrid model.

% 
% In particular, we consider single-server PIR with a computationally bounded adversary.
% 
% The security definition is given below
% 
% \begin{definition}[Security of Computational PIR]
%     Let $\mathsf{PIR.Query}$ be the query function which takes input as an index in $[|\mathsf{DB}|]$, where $\mathsf{DB}$ is the target database, and let $r$ be the user's random coins.
%     % 
%     Then for every distinct indices $i, j \in[|\mathsf{DB}|]$, and for every probabilistic polynomial-time adversary $\mathcal{A}$ bounded by security parameter $\lambda$, 
%     % 
%     \begin{equation*}
%         |\Pr_r[\mathcal{A}(1^\lambda, \mathsf{PIR.Query}(i, r))=1]-\Pr_r[\mathcal{A}(1^\lambda, \mathsf{PIR.Query}(j, r))=1]|\leq\mathsf{negl}(\lambda).
%     \end{equation*}
% \end{definition}
% 
% PIR is particularly handy when we want to hide access patterns, and later in our framework, we utilize PIR to ensure the security of sparse matrix multiplication.
% 

\subsection{Recommendation model}

% \sz{Basic Assumptions}

Recall that we assume there are two platforms, a rating platform $P_0$, and a social platform $P_1$. 
We assume $P_0$ holds a private rating matrix $\textbf{R}\in\mathbb{R}^{m\times n}$, and $P_1$ holds a private user social matrix $\textbf{S}\in\mathbb{R}^{m\times m}$, where $n$ and $m$  denote the number of items and their common users, respectively.
Also, we denote the user latent factor matrix as $\textbf{U}\in\mathbb{R}^{k\times m}$ and item latent factor matrix as $\textbf{V}\in\mathbb{R}^{k\times n}$, where $k$ is the dimension of latent factors. 
We further define an indication matrix $\textbf{I}\in\mathbb{R}^{m\times n}$, where $I_{i,j}$ denotes whether user $i$ has rated item $j$.

% \sz{The model that we choose}

Existing work \cite{tang2013social} summarizes factorization based social recommendation models as the combination of a ``basic factorization model'' and a ``social information model''. 
To date, different kinds of social information models have been proposed \cite{ma2011recommender,jamali2009trustwalker}, and their common intuition is that users with social relations tend to have similar preferences. 
In this work, we focus on the classic social recommendation model, i.e., Soreg \cite{ma2011recommender}, which aims to learn $\textbf{U}$ and $\textbf{V}$ by minimizing the following objective function, 
\begin{equation}
\label{equation:obj}
% \begin{split}
%\mathop { \min }\limits_{\textbf{u}_{*, i},\textbf{v}_{*, j}} 
\sum\limits_{i=1}^{m} \sum\limits_{j=1}^{n} \frac{1}{2}I_{i,j}\left(r_{i,j} - {\textbf{u}_{*, i}}^T \textbf{v}_{*, j}\right)^2 
+ \frac{\lambda}{2} \sum\limits_{i=1}^{m} \|\textbf{u}_{*, i}\|_F^2 
+ \frac{\lambda}{2} \sum\limits_{j=1}^{n} \|\textbf{v}_{*, j}\|_F^2
+  \frac{\gamma}{2}\sum\limits_{i=1}^{m} \sum\limits_{f=1}^{m} s_{i,f} \|\textbf{u}_{*, i} - \textbf{u}_{*, f} \|_F^2,
% \end{split}	
\end{equation}
where the first term is the basic factorization model, the last term is the social information model, and the middle two terms are regularizers, 
$\|\cdot\|_F^2$ is the Frobenius norm, $\lambda$ and $\gamma$ are hyper-parameters.
If we denote $\textbf{D} \in \mathbb{R}^{m \times m}$ as a diagonal matrix with diagonal element $d_b=\sum_{c=1}^{m}s_{b,c}$ and $\textbf{E} \in \mathbb{R}^{m \times m}$ as a diagonal matrix with diagonal element $e_i=\sum_{b=1}^{m}s_{b,i}$.
The gradients of $\mathcal{L}$ in Eq. \eqref{equation:obj} with respect to $\textbf{U}$ and $\textbf{V}$ are,
\begin{equation}
\label{eq:gradient-u}
% \begin{split}
\frac{\partial \mathcal{L}}{\partial \textbf{U}} 
= \underbrace{-\textbf{V} \left( {\left(\textbf{R} - \textbf{U}^T \textbf{V}\right)}^T \circ \textbf{I} \right) + \lambda \textbf{U}}_{\text{Rating term: computed by} P_0~ \text{locally}}
% \\
\quad \quad + 
\underbrace{\frac{\gamma }{2} \textbf{U} (\textbf{D}^T +  \textbf{E}^T) -\gamma \textbf{U}\textbf{S}^T }_{\text{Social term: computed by} P_0~\text{and}~P_1~ \text{collaboratively}},
% \end{split}	,
\end{equation} 
\begin{equation}
\label{eq:gradient-v}
\begin{split}
\frac{\partial \mathcal{L}}{\partial \textbf{V}} 
&= \underbrace{-\textbf{U} \left( {\left(\textbf{R} - \textbf{U}^T \textbf{V}\right)}^T \circ \textbf{I} \right) + \lambda \textbf{V}}_{\text{Rating term: computed by} P_0~ \text{locally}}
\end{split}	~~.
\end{equation} 

% Some key points that we need to focus on:
%   (1) We need to change the model name, as to emphasis the secure, sparse, for our framework
%   (2) We need to focus on the sparse
%   (3) Highlight the difference from existing work
%   (4) Why do we choose not to use NN for recommendation task?
%   (5) Highlight our theoretical proofs of security and recommendation (code stat?)
%%%%%%%%%%%%%%%%%%%%%%%%%%%%%%%%%%%%%%%%%%%%%%%%%%%%%%%%%%%%%%%%%%%%%%%%%%%%%%%%
\section{Framework}
%%%%%%%%%%%%%%%%%%%%%%%%%%%%%%%%%%%%%%%%%%%%%%%%%%%%%%%%%%%%%%%%%%%%%%%%%%%%%%%%

We summarize our proposed \modelname~ framework in Figure \ref{fig:frame}.
To begin with, we assume that party $P_0$ holds the rating matrix $\textbf{R}$ and $P_1$ holds the social matrix $\textbf{S}$. 
At first, $P_0$ randomly initializes $\textbf{U}\sample\mathbb{R}^{k\times m}$ and $\textbf{V}\sample \mathbb{R}^{k\times n}$.
Then, for each iteration (while the model dose not coverage), we let $P_0$ and $P_1$ jointly evaluate the social term defined in Eq \ref{eq:gradient-u}.
$P_0$ then locally calculates the rating term in Eq \ref{eq:gradient-u} and Eq \ref{eq:gradient-v}, as well as $\partial\mathcal{L}/\partial \textbf{U}$ and $\partial\mathcal{L}/\partial \textbf{V}$.
Party $P_0$ then locally updates $\textbf{U}$ and $\textbf{V}$ accordingly and ends the iteration.

\begin{figure}[t!]
    \centering
    \input{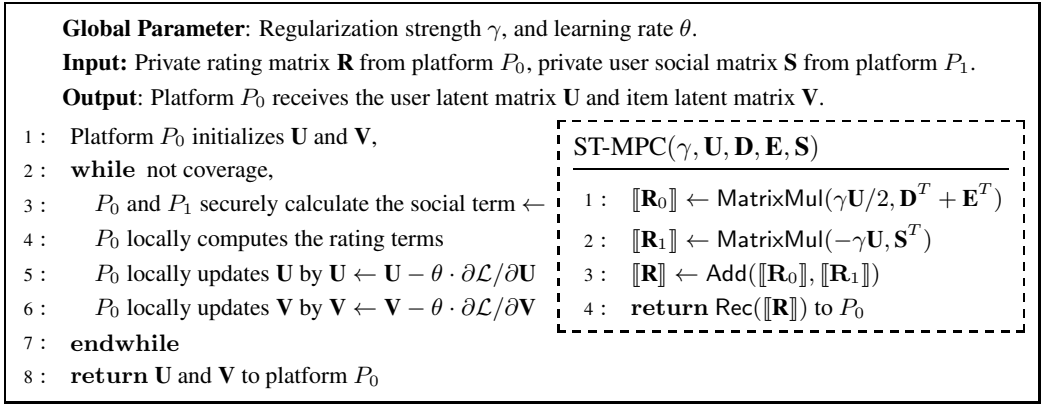}
    \caption{Our proposed \modelname~ framework, where $\mathsf{MatrixMul}$ stands for secure matrix multiplication protocol, $\mathsf{Add}$ stands for secure add protocol, $\mathsf{Rec}$ stands for reconstruction protocol for secret sharing.}
    \label{fig:frame}
\end{figure}

\nosection{Communication efficiency}
In our framework, the only communication between two parties occurs in the ST-MPC protocol.
Since we choose additive secret sharing, the $\mathsf{Add}$ protocol contains only local computation, we claim that the communication efficiency of \modelname~ significantly relies on the efficiency of matrix multiplication protocol.
We give a popular $\mathsf{MatrixMul}$ protocol in Figure \ref{fig:simple_solution} and analyze its efficiency in our framework.
The protocol in Figure \ref{fig:simple_solution} requires $km^2\log_2N$ bit online communication, where $m$ is the number of users and $k$ is the dimension of latent factors.
As for the usual case where the number of users is $\approx 10^4$, $k=10$, and $\log_N=64$, one invocation of $\mathsf{MatrixMul}$ protocol would have a total communication of around $7.4\mathsf{GB}$.
Considering $100$ iterations of our framework, this leads to $\approx 1491\mathsf{GB}$ communication, which is impractical.
Fortunately, the social matrices ($\textbf{D}$, $\textbf{E}$, and $\textbf{S}$) are highly sparse in social recommendation.
In the following section, we propose a PIR-based sparse matrix multiplication protocol with better communication efficiency.

\subsection{Secure sparse matrix multiplication}

Essentially, any matrix could be represented by a value vector and a location vector, where the value vector contains all non-zero values and the location vector contains locations of those values. 
That is, a sparse matrix $\textbf{Y}\in\mathbb{R}^{m\times m}$ can be represented by a pair of vectors $(l_y\in\mathbb{N}_{m^2}^t, v_y\in\mathbb{R}^t)$, where $t$ is the number of non-zero values in $\textbf{Y}$.

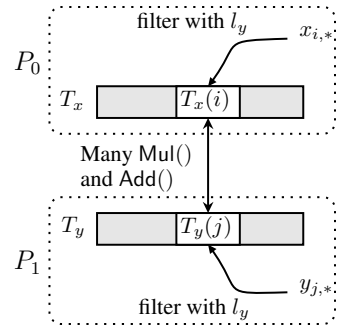
\begin{wrapfigure}{R}{0.35\textwidth}
\centering
\tikzset{every picture/.style={line width=0.75pt}} %set default line width to 0.75pt        

\begin{tikzpicture}[x=0.75pt,y=0.75pt,yscale=-0.8,xscale=0.8]
%uncomment if require: \path (0,242); %set diagram left start at 0, and has height of 242

%Shape: Rectangle [id:dp4909494739251954] 
\draw  [color={rgb, 255:red, 0; green, 0; blue, 0 }  ,draw opacity=1 ][fill={rgb, 255:red, 0; green, 0; blue, 0 }  ,fill opacity=0.1 ] (55,50) -- (185,50) -- (185,70) -- (55,70) -- cycle ;
%Shape: Rectangle [id:dp35139940751017473] 
\draw  [fill={rgb, 255:red, 255; green, 255; blue, 255 }  ,fill opacity=1 ] (105,50) -- (145,50) -- (145,70) -- (105,70) -- cycle ;

%Shape: Rectangle [id:dp6129215704489435] 
\draw  [color={rgb, 255:red, 0; green, 0; blue, 0 }  ,draw opacity=1 ][fill={rgb, 255:red, 0; green, 0; blue, 0 }  ,fill opacity=0.1 ] (55,130) -- (185,130) -- (185,150) -- (55,150) -- cycle ;
%Shape: Rectangle [id:dp843471669389056] 
\draw  [fill={rgb, 255:red, 255; green, 255; blue, 255 }  ,fill opacity=1 ] (105,130) -- (145,130) -- (145,150) -- (105,150) -- cycle ;

%Rounded Rect [id:dp5637477992127102] 
\draw  [dash pattern={on 0.84pt off 2.51pt}] (25,7.52) .. controls (25,3.37) and (28.37,0) .. (32.52,0) -- (197.48,0) .. controls (201.63,0) and (205,3.37) .. (205,7.52) -- (205,72.48) .. controls (205,76.63) and (201.63,80) .. (197.48,80) -- (32.52,80) .. controls (28.37,80) and (25,76.63) .. (25,72.48) -- cycle ;
%Curve Lines [id:da7906257235410712] 
\draw    (175,20) .. controls (129.99,21.3) and (146.92,21.33) .. (126.64,47.9) ;
\draw [shift={(125,50)}, rotate = 308.53999999999996] [fill={rgb, 255:red, 0; green, 0; blue, 0 }  ][line width=0.08]  [draw opacity=0] (5.36,-2.57) -- (0,0) -- (5.36,2.57) -- cycle    ;
%Rounded Rect [id:dp23207029593580497] 
\draw  [dash pattern={on 0.84pt off 2.51pt}] (25,127.52) .. controls (25,123.37) and (28.37,120) .. (32.52,120) -- (197.48,120) .. controls (201.63,120) and (205,123.37) .. (205,127.52) -- (205,192.48) .. controls (205,196.63) and (201.63,200) .. (197.48,200) -- (32.52,200) .. controls (28.37,200) and (25,196.63) .. (25,192.48) -- cycle ;
%Straight Lines [id:da8961749512133419] 
\draw    (125,73) -- (125,127) ;
\draw [shift={(125,130)}, rotate = 270] [fill={rgb, 255:red, 0; green, 0; blue, 0 }  ][line width=0.08]  [draw opacity=0] (7.14,-3.43) -- (0,0) -- (7.14,3.43) -- (4.74,0) -- cycle    ;
\draw [shift={(125,70)}, rotate = 90] [fill={rgb, 255:red, 0; green, 0; blue, 0 }  ][line width=0.08]  [draw opacity=0] (7.14,-3.43) -- (0,0) -- (7.14,3.43) -- (4.74,0) -- cycle    ;
%Curve Lines [id:da3881297175070857] 
\draw    (175,180) .. controls (129.99,181.3) and (147.87,181.02) .. (126.71,152.27) ;
\draw [shift={(125,150)}, rotate = 412.45] [fill={rgb, 255:red, 0; green, 0; blue, 0 }  ][line width=0.08]  [draw opacity=0] (5.36,-2.57) -- (0,0) -- (5.36,2.57) -- cycle    ;

% Text Node
\draw (106,50.4) node [anchor=north west][inner sep=0.75pt]  [xscale=0.85,yscale=0.85]  {$T_{x}( i)$};
% Text Node
\draw (106,130.4) node [anchor=north west][inner sep=0.75pt]  [xscale=0.85,yscale=0.85]  {$T_{y}( j)$};
% Text Node
\draw (181,10.4) node [anchor=north west][inner sep=0.75pt]  [xscale=0.85,yscale=0.85]  {$x_{i,*}$};
% Text Node
\draw (75,2) node [anchor=north west][inner sep=0.75pt]  [font=\normalsize,xscale=0.85,yscale=0.85] [align=left] {{ filter with $\displaystyle l_{y}$}};
% Text Node
\draw (76,182) node [anchor=north west][inner sep=0.75pt]  [font=\normalsize,xscale=0.85,yscale=0.85] [align=left] {{ filter with $\displaystyle l_{y}$}};
% Text Node
\draw (181,170.4) node [anchor=north west][inner sep=0.75pt]  [xscale=0.85,yscale=0.85]  {$y_{j,*}$};
% Text Node
\draw (1,26.4) node [anchor=north west][inner sep=0.75pt]  [font=\large,xscale=0.85,yscale=0.85]  {$P_{0}$};
% Text Node
\draw (1,152.4) node [anchor=north west][inner sep=0.75pt]  [font=\large,xscale=0.85,yscale=0.85]  {$P_{1}$};
% Text Node
\draw (30,51.4) node [anchor=north west][inner sep=0.75pt]  [xscale=0.85,yscale=0.85]  {$T_{x}$};
% Text Node
\draw (31,130.4) node [anchor=north west][inner sep=0.75pt]  [xscale=0.85,yscale=0.85]  {$T_{y}$};
% Text Node
\draw (43,85) node [anchor=north west][inner sep=0.75pt]  [xscale=0.85,yscale=0.85] [align=left] {Many $\mathsf{Mul()}$\\and $\mathsf{Add()}$};

\end{tikzpicture}
\caption{Matrix multiplication with insensitive sparsity. }
\label{fig:matrixmul:insensi}
\vspace{-20pt}
\end{wrapfigure}
\nosection{Dense-sparse matrix multiplication}
Considering the case where $\textbf{X}\in\mathbb{R}^{k\times m}$ is the dense matrix from $P_0$ and $\textbf{Y}\in\mathbb{R}^{m\times m}$ is the sparse matrix from $P_1$.
% 
% Essentially, any matrix could be represented by a value vector and a location vector, where the value vector contains all non-zero values and the location vector contains locations of those values. 
% \ccc{duplicate?}
% % 
% That is, a sparse matrix $\textbf{Y}\in\mathbb{R}^{m\times m}$ can be represented by $l_y\in\mathbb{Z}_{m^2}^t$ and $v_y\in\mathbb{R}^t$, where $t$ is the number of non-zero values in $\textbf{Y}$. 
% % 
Now we consider the following two cases.

\smallskip\noindent\emph{Case 1: insensitive sparsity, i.e., insensitive $l_y$ and sensitive $v_y$}. 
This refers to the case where the locations of zero values are public or contain no sensitive information. 
Take the social matrices ($\textbf{D}$ and $\textbf{E}$) for example, both of them are diagonal, and thus the location vector is insensitive while the value vector is still sensitive. 

Our protocol mainly works as follows. 
First, $P_0$ and $P_1$ parse $\textbf{X}$ and $\textbf{Y}$ into two tables $T_x$ and $T_y$ separately, where the value set of each bin in $T_x$ is a subset of one row in $\textbf{X}$, that is, $T_x(i)\subseteq x_{i,*}$.
Similarly, bin set in $T_y$ is a subset of one column in $\textbf{Y}$, $T_y(i)\subseteq y_{*,i}$. 
The intuition behind is to use bins to contain only the necessary values needed to calculate the output value (which means filter out the zero multiplies in each bin). 
Take the first bin for example (that is, $T_x(0)$ and $T_y(0)$), for $j\in [m]$, $T_x(0)$ contains all $x_{0,j}$ where $y_{j,0}$ is a non-zero value, and $T_y(0)$ contains all non-zero $y_{j,0}$. In order to get the final result, we perform the secure inner product protocol on $T_x(0)$ and $T_y(0)$, and denote the result as $\llbracket z_{0,0}\rrbracket$. 
We show the high level idea in Figure \ref{fig:matrixmul:insensi}.
By doing this, our protocol concretely consumes $k|l_y|$ Beaver's triples and therefore has $O(k|l_y|)$ online communication complexity. Figure \ref{fig:protocol} shows the technical details of our proposed protocol for case 1.
For Line 1 in ST-MPC (Figure \ref{fig:frame}), clearly both parties know that $\textbf{D}$ and $\textbf{E}$ are diagonal matrices, that is, $|l_y|=m$. 
Therefore, our proposed protocol in Figure \ref{fig:matrixmul:insensi} can drop the complexity from $O(km^2)$ to $O(km)$.

\begin{lemma}
The first protocol in Figure \ref{fig:protocol} is secure against semi-honest adversary if we assume the existence of secure addition and multiplication semi-honest MPC protocols.
\end{lemma}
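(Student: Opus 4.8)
The plan is to prove security in the standard simulation-based paradigm for semi-honest adversaries: for each corrupted party $P_b$ ($b\in\{0,1\}$) I would exhibit a probabilistic polynomial-time simulator $\mathcal{S}_b$ that, on input only $P_b$'s private data (the rows or columns parsed into $T_x$ or $T_y$) together with $P_b$'s output share $\llbracket \textbf{Z}\rrbracket_b$, produces a transcript computationally indistinguishable from $P_b$'s real view. Rather than reasoning about the cryptographic internals of multiplication, I would work in the $(\mathsf{Add},\mathsf{Mul})$-hybrid model, where every invocation of the secure addition and multiplication subprotocols is replaced by a call to the corresponding ideal functionality, and then appeal to the sequential modular composition theorem for semi-honest protocols. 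This reduces the task to two claims: (i) the protocol, viewed as a sequence of ideal $\mathsf{Add}$/$\mathsf{Mul}$ calls plus local computation, leaks nothing beyond $P_b$'s input and output; and (ii) each ideal call can be instantiated by the assumed secure realization without breaking indistinguishability.

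For step (i), the crucial observation is that in the \emph{insensitive-sparsity} case the location vector $l_y$ is public, so both parties agree on the partition of $\textbf{X}$ and $\textbf{Y}$ into the tables $T_x$ and $T_y$; in particular the number of bins and the number of entries per bin, and hence the exact number and argument-pattern of the $\mathsf{Mul}$ and $\mathsf{Add}$ calls, are functions of public data only. Thus the control flow is data-independent given the public inputs and can be reproduced verbatim by $\mathcal{S}_b$. The $\mathsf{Add}$ calls are purely local under additive sharing and generate no messages, so they need no simulation. For each $\mathsf{Mul}$ call, the ideal functionality returns to $P_b$ a fresh additive share that, by the hiding property of the sharing, is uniform in $\mathbb{Z}_N$ and independent of the other party's share; $\mathcal{S}_b$ therefore samples these intermediate output shares uniformly. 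The only constraint is consistency with the final output: since the summands feeding each accumulator $\llbracket z_{i,j}\rrbracket$ are fixed by $l_y$, I would have $\mathcal{S}_b$ sample all-but-one of the summands freely and fix the last so that the accumulated share equals the prescribed coordinate of $\llbracket \textbf{Z}\rrbracket_b$; by uniformity of shares this joint distribution matches the real one.

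For step (ii), I would replace the ideal calls one at a time with their secure realizations and invoke the simulator guaranteed by the security of $\mathsf{Mul}$ for each, arguing indistinguishability through a standard hybrid argument across the polynomially many invocations. Sequential composition then yields a single simulator for the whole protocol, completing the proof for both $b=0$ and $b=1$.

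I expect the main obstacle to be pinning down the joint distribution of the many intermediate shares so that the simulator's independent-uniform sampling provably coincides with the real execution—concretely, verifying that the accumulation pattern dictated by $l_y$ introduces no correlations the simulator fails to reproduce, and that the composition argument correctly accounts for intermediate outputs that are revealed and then fed into later multiplications. Everything else (the public and hence simulatable control flow, the message-free $\mathsf{Add}$, the appeal to composition) is routine once the subprotocols are assumed secure.
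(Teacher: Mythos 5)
Your proposal is correct and follows the same standard simulation-based route the paper relies on (its proof is deferred to the appendix): work in the $(\mathsf{Add},\mathsf{Mul})$-hybrid model, note that the public location vector $l_y$ makes the entire bin structure and call pattern data-independent, simulate each party's view by uniform share sampling conditioned on the output share, and conclude by sequential composition. The one obstacle you flag---revealed intermediate values feeding later multiplications---does not actually arise here, since every intermediate accumulator stays secret-shared until the protocol ends, so your argument goes through without further work.
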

\begin{proof}
Please find the proof in the Technical Appendix.
\end{proof}

% \begin{figure}[t]
%   \centering
%   \input{protocols/dense_m_diagonal_m}
%   \caption{Dense-sparse $\mathsf{MatrixMul}(\textbf{X}, \textbf{Y})$ with insensitive sparsity, that is, $\textbf{X}\in\mathbb{R}^{k\times m}, \textbf{Y}\in\mathbb{R}^{m\times m}$, and $\textbf{Y}$'s location vector $l_y$ is public.}
%   \label{fig:dense_m_diagonal_m}
%   \vskip -0.1in
% \end{figure}

% \begin{figure}[t]
%     \centering
%     \input{protocols/dense_m_sparse_m}
%     \caption{Dense-sparse $\mathsf{MatrixMul}$ with sensitive sparsity, that is, $\textbf{X}\in\mathbb{R}^{k\times m}, \textbf{Y}\in\mathbb{R}^{m\times m}$, and $\textbf{Y}$'s location vector $l_y$ is private.}
%   \label{fig:dense_m_sparse_m}
% \end{figure}

\begin{figure}[t]
  \centering
  \input{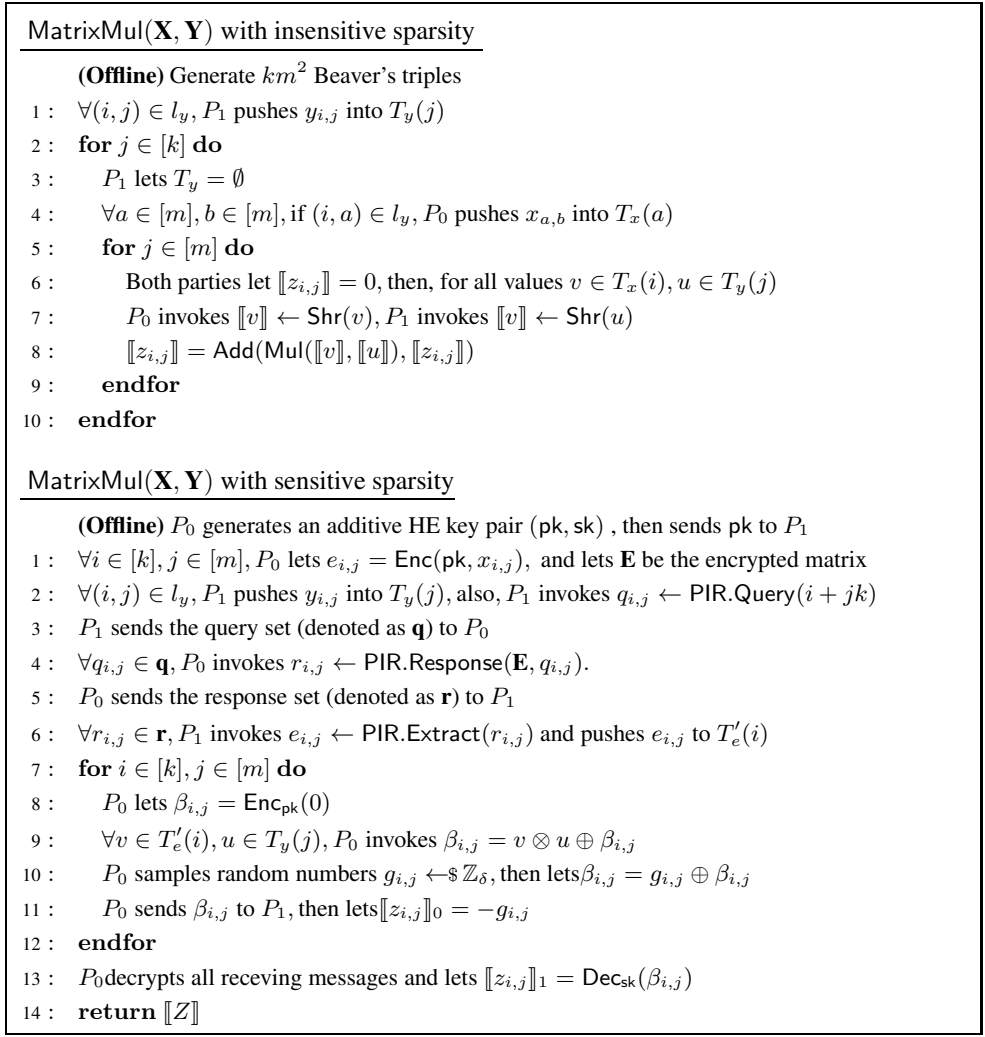}
  \caption{Dense-sparse $\mathsf{MatrixMul}(\textbf{X}, \textbf{Y})$ with insensitive and sensitive sparsity protocols, where we have $\textbf{X}\in\mathbb{R}^{k\times m}, \textbf{Y}\in\mathbb{R}^{m\times m}$.}
  \label{fig:protocol}
  \vskip -0.1in
\end{figure}

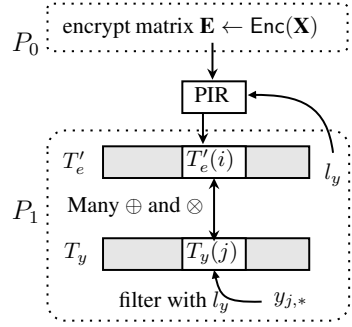
\begin{wrapfigure}{R}{0.35\textwidth}
\centering
\tikzset{every picture/.style={line width=0.75pt}} %set default line width to 0.75pt        

\begin{tikzpicture}[x=0.75pt,y=0.75pt,yscale=-0.8,xscale=0.8]
%uncomment if require: \path (0,282); %set diagram left start at 0, and has height of 282

%Shape: Rectangle [id:dp06641938816989823] 
\draw  [color={rgb, 255:red, 0; green, 0; blue, 0 }  ,draw opacity=1 ][fill={rgb, 255:red, 0; green, 0; blue, 0 }  ,fill opacity=0.1 ] (60,158) -- (190,158) -- (190,178) -- (60,178) -- cycle ;
%Shape: Rectangle [id:dp8192771034737609] 
\draw  [fill={rgb, 255:red, 255; green, 255; blue, 255 }  ,fill opacity=1 ] (110,158) -- (150,158) -- (150,178) -- (110,178) -- cycle ;

%Rounded Rect [id:dp5562540767603774] 
\draw  [dash pattern={on 0.84pt off 2.51pt}] (25,12.82) .. controls (25,11.26) and (26.26,10) .. (27.82,10) -- (211.35,10) .. controls (212.9,10) and (214.17,11.26) .. (214.17,12.82) -- (214.17,37.18) .. controls (214.17,38.74) and (212.9,40) .. (211.35,40) -- (27.82,40) .. controls (26.26,40) and (25,38.74) .. (25,37.18) -- cycle ;
%Rounded Rect [id:dp4310081763085962] 
\draw  [dash pattern={on 0.84pt off 2.51pt}] (25,101.29) .. controls (25,95.05) and (30.05,90) .. (36.29,90) -- (203.88,90) .. controls (210.11,90) and (215.17,95.05) .. (215.17,101.29) -- (215.17,198.71) .. controls (215.17,204.95) and (210.11,210) .. (203.88,210) -- (36.29,210) .. controls (30.05,210) and (25,204.95) .. (25,198.71) -- cycle ;
%Curve Lines [id:da46005656476311585] 
\draw    (160,198) .. controls (136.03,198.63) and (133.71,195.13) .. (130.61,180.86) ;
\draw [shift={(130,178)}, rotate = 438.14] [fill={rgb, 255:red, 0; green, 0; blue, 0 }  ][line width=0.08]  [draw opacity=0] (5.36,-2.57) -- (0,0) -- (5.36,2.57) -- cycle    ;
%Straight Lines [id:da06135635210618062] 
% \draw    (56.17,28) -- (116.17,28) ;
% \draw [shift={(119.17,28)}, rotate = 180] [fill={rgb, 255:red, 0; green, 0; blue, 0 }  ][line width=0.08]  [draw opacity=0] (7.14,-3.43) -- (0,0) -- (7.14,3.43) -- (4.74,0) -- cycle    ;
%Shape: Rectangle [id:dp6296504553695599] 
\draw  [color={rgb, 255:red, 0; green, 0; blue, 0 }  ,draw opacity=1 ][fill={rgb, 255:red, 0; green, 0; blue, 0 }  ,fill opacity=0.1 ] (60,100) -- (190,100) -- (190,120) -- (60,120) -- cycle ;
%Shape: Rectangle [id:dp5088083353976305] 
\draw  [fill={rgb, 255:red, 255; green, 255; blue, 255 }  ,fill opacity=1 ] (110,100) -- (150,100) -- (150,120) -- (110,120) -- cycle ;

%Straight Lines [id:da8039515855755941] 
\draw    (130,157) -- (130,123) ;
\draw [shift={(130,120)}, rotate = 450] [fill={rgb, 255:red, 0; green, 0; blue, 0 }  ][line width=0.08]  [draw opacity=0] (7.14,-3.43) -- (0,0) -- (7.14,3.43) -- (4.74,0) -- cycle    ;
\draw [shift={(130,160)}, rotate = 270] [fill={rgb, 255:red, 0; green, 0; blue, 0 }  ][line width=0.08]  [draw opacity=0] (7.14,-3.43) -- (0,0) -- (7.14,3.43) -- (4.74,0) -- cycle    ;
%Shape: Rectangle [id:dp8939950714637068] 
\draw   (110,59) -- (150,59) -- (150,79) -- (110,79) -- cycle ;

%Straight Lines [id:da015407085758616157] 
\draw    (129,34.67) -- (129,55) ;
\draw [shift={(129,58)}, rotate = 270] [fill={rgb, 255:red, 0; green, 0; blue, 0 }  ][line width=0.08]  [draw opacity=0] (7.14,-3.43) -- (0,0) -- (7.14,3.43) -- (4.74,0) -- cycle    ;
%Straight Lines [id:da1084378483872388] 
\draw    (123,78.67) -- (123,95.67) ;
\draw [shift={(123,98.67)}, rotate = 270] [fill={rgb, 255:red, 0; green, 0; blue, 0 }  ][line width=0.08]  [draw opacity=0] (7.14,-3.43) -- (0,0) -- (7.14,3.43) -- (4.74,0) -- cycle    ;
%Curve Lines [id:da611650738051654] 
\draw    (205,104.11) .. controls (201.28,74.58) and (185.17,67.11) .. (154.09,67.6) ;
\draw [shift={(151.17,67.67)}, rotate = 358.26] [fill={rgb, 255:red, 0; green, 0; blue, 0 }  ][line width=0.08]  [draw opacity=0] (5.36,-2.57) -- (0,0) -- (5.36,2.57) -- cycle    ;

% Text Node
\draw (111,158.4) node [anchor=north west][inner sep=0.75pt]  [xscale=0.85,yscale=0.85]  {$T_{y}( j)$};
% Text Node
\draw (65,190) node [anchor=north west][inner sep=0.75pt]  [font=\normalsize,xscale=0.85,yscale=0.85] [align=left] {{ filter with $\displaystyle l_{y}$}};
% Text Node
\draw (166,190.4) node [anchor=north west][inner sep=0.75pt]  [xscale=0.85,yscale=0.85]  {$y_{j,*}$};
% Text Node
\draw (1,26.4) node [anchor=north west][inner sep=0.75pt]  [font=\large,xscale=0.85,yscale=0.85]  {$P_{0}$};
% Text Node
\draw (1,132.4) node [anchor=north west][inner sep=0.75pt]  [font=\large,xscale=0.85,yscale=0.85]  {$P_{1}$};
% Text Node
\draw (34,159.4) node [anchor=north west][inner sep=0.75pt]  [xscale=0.85,yscale=0.85]  {$T_{y}$};
% Text Node
% \draw (39,17.4) node [anchor=north west][inner sep=0.75pt]  [xscale=0.85,yscale=0.85]  {$\textbf{X}$};
% Text Node
\draw (33,16.4) node [anchor=north west][inner sep=0.75pt]  [xscale=0.85,yscale=0.85]  {encrypt matrix $\textbf{E}\gets \mathsf{Enc}(\textbf{X})$};
% Text Node
% \draw (61,12) node [anchor=north west][inner sep=0.75pt]  [font=\small,xscale=0.85,yscale=0.85] [align=left] {encrypt};
% Text Node
\draw (111,100.4) node [anchor=north west][inner sep=0.75pt]  [xscale=0.85,yscale=0.85]  {$T^\prime_{e}(i)$};
% Text Node
\draw (33,101.4) node [anchor=north west][inner sep=0.75pt]  [xscale=0.85,yscale=0.85]  {$T^\prime_{e}$};
% Text Node
\draw (116,61) node [anchor=north west][inner sep=0.75pt]  [xscale=0.85,yscale=0.85] [align=left] {PIR};
% Text Node
\draw (198,109.4) node [anchor=north west][inner sep=0.75pt]  [xscale=0.85,yscale=0.85]  {$l_{y}$};
% Text Node
\draw (36,130) node [anchor=north west][inner sep=0.75pt]  [xscale=0.85,yscale=0.85] [align=left] {Many $\displaystyle \oplus$ and $\otimes$};

\end{tikzpicture}
\caption{Matrix multiplication with sensitive sparsity. }
\label{fig:matrixmul:sensi}
%  \vspace{-20pt}
\end{wrapfigure}

\smallskip\noindent\emph{Case 2: sensitive sparsity, i.e., sensitive $l_y$ and sensitive $v_y$}. 
For a more general case, where both the location vector and the value vector contain sensitive information. 
Take the social matrix $\textbf{S}$ for instance, its location vector indicates the existence of a social relation between two users, its value vector further shows the strength of their relation, and both of which are sensitive.

In this case, both the dense matrix $\textbf{X}$ and the entire sparse matrix $\textbf{Y}$ are sensitive. 
Following the idea in case 1, the matrix multiplication protocol should first generate $T_x, T_y$ according to $v_x, v_y$ and $l_y$, and then perform the inner product multiplication for each aligned bins in $T_x, T_y$. 
Still, $P_1$ can generate $T_y$ according to its own inputs $v_y, l_y$.
However, $P_0$ cannot generate $T_x$ directly, since $v_x$ is kept by itself while $l_y$ is held by $P_1$. 
We make a communication and computation trade-off by leveraging PIR techniques, and as a result, our PIR-based approach has lower concrete communication, and overall is faster than the baseline protocol.

We show the high-level idea of our PIR-based protocol in Figure 
\ref{fig:matrixmul:sensi}. 
The intuition behind is to let $P_1$ obliviously filter each bin in $T_x$ since both value vector and location vector are sensitive. 
In summary, first $P_0$ encrypts all the values in $T_x$, the encrypted table is denoted as $T_e$. Then $P_1$ and $P_0$ invoke PIR protocol, where $P_0$ acts as server and sets $T_e$ as PIR database, $P_1$ acts as client and parses $l_y$ to many PIR queries. 
At the end of PIR protocol, $P_1$ receives the encrypted and filtered table $T^\prime_e$. 
Afterwards $P_1$ performs secure inner product evaluation. 
By doing this, the communication complexity drops from $O(km^2)$ to $O(\alpha km)$, compared with the simple solution. The details of our protocol are shown in Figure \ref{fig:protocol}. 
For Line 2 in ST-MPC (Figure \ref{fig:frame}), the social matrix ($\textbf{S}$) is sparse in nature, and thus our proposed protocol in Figure \ref{fig:matrixmul:sensi} can significantly improve its efficiency. 
In summary, with our proposed two secure $\mathsf{MatrixMul}$ protocols, one can securely calculate the social term efficiently. 
For instance, again considering the social recommendation with $\approx 10^4$ users, our proposal only requires a total of $\approx 3.6$GB communication for each iteration.

\begin{lemma}
The second protocol in Figure \ref{fig:protocol} is secure against semi-honest adversary with the leakage of $|l_y|$ if we assume the existence of a secure PIR protocol.
\end{lemma}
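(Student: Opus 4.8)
The plan is to prove this in the standard real/ideal simulation paradigm for semi-honest two-party computation: for each corrupted party I would exhibit a probabilistic polynomial-time simulator that, given only that party's input, its prescribed output share, and (for $P_0$) the admissible leakage $|l_y|$, produces a transcript computationally indistinguishable from the party's real view. Correctness of the functionality (that $\mathsf{Rec}(\llbracket Z\rrbracket)=\textbf{X}\textbf{Y}$) follows directly from the homomorphic identity $\bigoplus_{v,u}(v\otimes u)=\enc_\pk(\sum_{\cdot} x_{i,\cdot}y_{\cdot,j})$ together with the additive masking $\beta_{i,j}=g_{i,j}\oplus\beta_{i,j}$ and the two shares $\llbracket z_{i,j}\rrbracket_1=-g_{i,j}$, $\llbracket z_{i,j}\rrbracket_0=\dec_\sk(\beta_{i,j})$, so I focus on privacy. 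I would also make explicit that beyond the assumed secure PIR we invoke the IND-CPA (semantic) security of the additive HE scheme, which is implicit in the tool description.

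For \emph{corrupt} $P_1$ (the PIR client holding the sparse $\textbf{Y}$), the incoming messages are $\pk$ and the response set $\textbf{r}$, from which $P_1$ extracts the table $T'_e$ of ciphertexts $\enc_\pk(x_{i,j})$. The simulator runs key generation honestly to obtain $\pk$ and, using $P_1$'s own queries $\textbf{q}$ (recomputable from $P_1$'s input), runs the honest $\mathsf{PIR.Response}$ against a dummy encrypted matrix whose every entry is $\enc_\pk(0)$. Indistinguishability is established by a hybrid argument over the $km$ entries of $\textbf{E}$, swapping each real $\enc_\pk(x_{i,j})$ for $\enc_\pk(0)$ one at a time, each step justified by IND-CPA security since $P_1$ never holds $\sk$. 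The remaining homomorphic accumulation and the outgoing $\beta_{i,j}$ are computed locally by $P_1$ and carry no new incoming information, so this completes $P_1$'s simulation.

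For \emph{corrupt} $P_0$ (the PIR server holding the dense $\textbf{X}$ and the key pair), the incoming messages are the query set $\textbf{q}$ and the masked ciphertexts it decrypts into its output shares. Given $|l_y|$, the simulator generates exactly $|l_y|$ queries via $\mathsf{PIR.Query}$ on arbitrary indices; indistinguishability from the real queries (which encode the true non-zero locations of $\textbf{Y}$) is precisely the query-privacy guarantee of the secure PIR protocol, and this is the one point where the leakage is unavoidable, since the number of queries reveals the number of non-zeros. For the masked ciphertexts, $\beta_{i,j}$ is formed by homomorphically adding a fresh $\enc_\pk(g_{i,j})$ with $g_{i,j}\sample\mathbb{Z}_\delta$, which re-randomizes it so that $\beta_{i,j}$ is distributed as a fresh encryption of $P_0$'s share $\llbracket z_{i,j}\rrbracket_0=\dec_\sk(\beta_{i,j})$; the simulator therefore encrypts the given output shares under $\pk$ with fresh randomness. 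If exact equivalence of the share distribution is desired rather than computational, it follows from choosing $\delta$ super-polynomially larger than the plaintext range, giving a statistical masking bound.

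The step needing the most care is $P_1$'s view: the messages it receives are not random-looking, but genuine PIR responses carrying ciphertexts of $P_0$'s private $x_{i,j}$. Single-server PIR provides only query privacy against the server and gives no database-hiding against the client, so one \emph{cannot} invoke the PIR assumption alone to argue that $P_1$ learns nothing about $\textbf{X}$; the argument must instead rest on the HE layer (every database entry is a ciphertext) and thread it through the hybrid so that the simulated $\mathsf{PIR.Response}$ outputs stay syntactically valid. I would make this rigorous by composing the HE-indistinguishability hybrid with the honest-execution correctness of $\mathsf{PIR.Response}$, and finally combine the two per-party simulators via the standard composition theorem for semi-honest protocols to conclude security with leakage $|l_y|$.
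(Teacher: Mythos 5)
Your simulation-based argument is the standard (and essentially the only) way to prove this lemma, and it is correct; the paper itself defers its proof to a technical appendix not reproduced in the source, but that proof can only be the same two-simulator, real/ideal argument you give. One observation of yours is worth keeping explicit: the lemma's stated hypothesis (a secure PIR protocol) is not sufficient on its own, since single-server PIR provides no database privacy against the client, so simulating $P_1$'s view genuinely requires the IND-CPA security of the additive HE scheme, and your remarks on re-randomizing $\beta_{i,j}$ via a fresh encryption of $g_{i,j}$ and on choosing $\delta$ large enough for statistical masking are exactly the details such a proof must pin down.
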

\begin{proof}
Please find the proof in the Technical Appendix.
\end{proof}

% Some key points that we need to focus on:
%   (1) We need to change the model name, as to emphasis the secure, sparse, for our framework
%   (2) We need to focus on the sparse
%   (3) Highlight the difference from existing work
%   (4) Why do we choose not to use NN for recommendation task?
%   (5) Highlight our theoretical proofs of security and recommendation (code stat?)
%%%%%%%%%%%%%%%%%%%%%%%%%%%%%%%%%%%%%%%%%%%%%%%%%%%%%%%%%%%%%%%%%%%%%%%%%%%%%%%%
\subsection{Security discussions of the social term}
\label{sec:security-social-term}
%%%%%%%%%%%%%%%%%%%%%%%%%%%%%%%%%%%%%%%%%%%%%%%%%%%%%%%%%%%%%%%%%%%%%%%%%%%%%%%%

In \modelname, two parties jointly calculate the social term $\gamma\textbf{U}(\textbf{D}^T +  \textbf{E}^T)/2 -\gamma \textbf{U}\textbf{S}^T $ and then reveal the social term to $P_0$ (see Eq.~\eqref{eq:gradient-u}).
% 
% We denote the ideal functionality of secure calculating the social term as $\mathcal{F}_\mathsf{st}$. For each training epoch, $P_0$ sends $\textbf{U}$ to $\mathcal{F}_\mathsf{st}$, and $P_1$ sends $\textbf{D}, \textbf{E}, \textbf{S}$ to $\mathcal{F}_\mathsf{st}$ accordingly. 
% % 
% From the view of $P_0$, after each iteration, it additionally learns the output of $\mathcal{F}_\mathsf{st}$, that is, the social term $\mathsf{ST}$. 
% 
The security of \modelname~relies on whether $P_0$ can resolve the social matrix $\textbf{S}^T$ given its own inputs $\textbf{U}$ and the social term.
We claim that this is difficult because, the number of equations $T$ (\#epoch, 100 in our experiments) is much smaller than that of the variables $n$ (\#user, much more than 100 in practice), which indicates that there are infinite solutions for this. 
In practice, $T<n$ can be easily satisfied for both the social platform and the rating platform. 
The reasons are two-folds. 
First, our proposed framework is secure against a semi-honest adversary (which is a popular threat model in the secure computation literature), i.e., both platforms will strictly follow the protocol execution. 
Second, the number of items whose size/scale is usually large and publicly-known in practice. 
Therefore, both platforms can agree on an iteration number $T$ such that $T<n$, before running our proposed framework. 
Each platform can shut down the program if it reaches the pre-defined number of iterations.
Moreover, the reveal of the social term to $P_0$ could be avoided by taking the whole model training procedure as an MPC functionality and designing a complicated protocol for it. 
Inevitably, such protocol introduces impractical communication costs, and we leave how to solve this efficiently as a future work. 

%%%%%%%%%%%%%%%%%%%%%%%%%%%%%%%%%%%%%%%%%%%%%%%%%%%%%%%%%%%%%%%%%%%%%%%%%%%%%%%%
\section{Experiments}
%%%%%%%%%%%%%%%%%%%%%%%%%%%%%%%%%%%%%%%%%%%%%%%%%%%%%%%%%%%%%%%%%%%%%%%%%%%%%%%% 

Our experiments intend to answer the following questions. 
\textbf{Q1:} How do the social recommendation models using both rating data on $P_0$ and social data on $P_1$ outperform the model that only uses rating data on $P_0$ (Section \ref{sec:exp-comp})? 
\textbf{Q2:} How does our model perform compared with SeSoRec (Section \ref{sec:exp-comp})? 
\textbf{Q3:} How does the social data sparsity affect the performance of SeSoRec and our model (Section \ref{sec:exp-sparse})?

\nosection{Implementation and setup}
We run our experiments on a machine with 4-Core 2.4GHz Intel Core i5 with 16G memory, we compile our program using a modern C++ compiler (with support for C++ standard 17).
In addition, our tests were run in a local network, with $\approx 3\mathsf{ms}$ network latency.
For additive HE scheme, we choose the implementation of libpaillier\footnote{libpaillier: http://acsc.cs.utexas.edu/libpaillier/, GPL license}.
Also, we use Seal-PIR\footnote{Seal-PIR: https://github.com/microsoft/SealPIR, MIT license} with same parameter setting as the original paper \cite{Angel2018PIRWC}.
For security, we choose 128-bit computational security and 40-bit statistical security as recommended by NIST \cite{barker2020nist}.
Similarly we leverage the generic ABY library\footnote{ABY: https://github.com/encryptogroup/ABY, LGPL license} to implement SeSoRec \cite{chen2019secure} and MPC building blocks such as addition, multiplication, and truncation.
In particular, we choose 64-bit secret sharing in all our experiments.

\nosection{Dataset}
We choose two popular benchmark datasets to evaluate the performance of our proposed model, i.e., Epinions \cite{massa2007trust} and LibraryThing (Lthing) \cite{zhao2015improving}, both of which are popularly used for evaluating social recommendation tasks. 
Following existing work \cite{chen2019secure}, we remove the users and items that have less than 15 interactions for both datasets. 
We summarize the statistics of both datasets after process in Table \ref{tab:dataset}. 
Notice that we assume users' rating data are located at $P_0$, users' social data are located at $P_1$, and $P_0$ and $P_1$ share the same user set.

\begin{table}[ht]
\centering
\caption{Dataset statistics.}
\begin{tabular}{c c cccccc}
\toprule
Dataset  && \#user & \#item & \#rating & rating density & \#social relation & social density\\
\midrule
Epinions        && 11,500  & 7,596   & 283,319 & 0.32\% & 275,117  & 0.21\%  \\
Lthing          && 15,039  & 14,957  & 529,992 & 0.24\% & 44,710   & 0.02\% \\
\bottomrule
\end{tabular}
\label{tab:dataset}
\end{table}

\nosection{Comparison Methods}
We compare \modelname~with the following classic and state-of-the-art models: 
\begin{itemize}[leftmargin=*] \setlength{\itemsep}{-\itemsep}
    \item \emph{MF} \cite{mnih2007probabilistic} is a classic matrix factorization model that only uses rating data on $P_0$, i.e., when $\gamma=0$ for \modelname.
    \item \emph{Soreg} \cite{ma2011recommender} is a classic social recommendation model, which does not consider data privacy and assumes both rating data and social data are available on $P_0$.
    \item \emph{SeSoRec} \cite{chen2019secure} tries to solve the privacy-preserving cross-platform social recommendation problem, but suffers from security and efficiency problem. %During experiments, 
\end{itemize}

\nosection{Hyper-parameters}
For all the model, during comparison, we set $k=10$. 
We tune learning rate $\theta$ and regularizer parameter $\lambda$ in $\{10^{-3},10^{-2},...,10^{1}\}$ to achieve their best values. 
We also report the effect of $K$ on model performance.

\nosection{Metrics}
We will evaluate both accuracy and efficiency of our proposed model. 
For accuracy, we choose Root Mean Square Error (RMSE) as the evaluation metric, since ratings range in [0, 5]. 
For efficiency, we report the computation time (in seconds) and the communication size between $P_0$ and $P_1$ (in gigabytes), if has, for all the models. 
We use five-fold cross-validation %as the evaluation metric
during experiments.

\begin{table}[t]
\centering
\caption{Comparison results of different models in terms of model accuracy (in RMSE), running time (in seconds), and communication size (in GB), on Epinions and Lthing datasets.
}
\begin{tabular}{@{}c c cccc c cccc@{}}
\toprule
\multirow{2}{*}{Models} && \multicolumn{4}{c}{Epinions dataset} && \multicolumn{4}{c}{Lthing dataset} \\
{}  && MF & Soreg & SeSoRec & \modelname && MF & Soreg & SeSoRec & \modelname \\
\cmidrule{0-0}  \cmidrule{3-6} \cmidrule{8-11}
RMSE            && 1.193    & 1.062     & 1.062     & 1.062             && 0.927     & 0.908  & 0.908  & 0.908\\
Offline Time    && -        & -         & 7,271     & \textbf{10.86}    && -         & -  & 14,450  & \textbf{8.912}  \\
Total Time      && 3.846    & 40.50     & 7,799     & \textbf{419.9}    &&  9.596    & 57.76  & 16,084  & \textbf{262.1} \\
Offline Comm.   && -        & -         & 788.3     &  \textbf{0}       && -         & -  & 1,348  & \textbf{0}  \\
Total Comm.     && -        & -         & 798.6     & \textbf{3.552}    && -         & -  & 1,365  & \textbf{2.201}  \\
\bottomrule
\end{tabular}
\label{tab:comparison-ep}
\end{table}

\begin{table}[t]
\centering
\caption{Comparison results by varying social data sparsity on Epinions and Lthing datasets.}
\begin{tabular}{@{}cc c ccc c ccc@{}}
\toprule
\multirow{2}{*}{Metric} & \multirow{2}{*}{Models} && \multicolumn{3}{c}{Epinions} && \multicolumn{3}{c}{Lthing } \\
{}  & {} && 0.4 & 0.6 & 0.8 && 0.4 & 0.6 & 0.8 \\
\cmidrule{0-1}  \cmidrule{4-6} \cmidrule{8-10}
% \midrule
\multirow{3}{*}{\shortstack{Total time \\(Seconds)}} &   SesoRec   && 7,799  & 7,799  & 7,799 && 16,084  & 16,084   & 16,084 \\
{} &   \modelname  && 366.3  & 381.2  & 401.8 && 194  & 217   & 238\\
{} & \scriptsize{(Improvement)} && \scriptsize{(21.29x)} & \scriptsize{(20.46x)} & \scriptsize{(19.41x)} && \scriptsize{(82.91x)} & \scriptsize{(74.12x)} & \scriptsize{(67.58x)} \\
\\
\multirow{3}{*}{\shortstack{Total communication \\(GB)}} &   SesoRec   && 798  & 798  & 798 && 1,366  & 1,366  & 1,366 \\
{} &   \modelname  && 3.12  & 3.29  & 3.46 && 1.62  & 1.82   & 2.01\\
{} & \scriptsize{(Improvement)} && \scriptsize{(255x)} & \scriptsize{(243x)} & \scriptsize{(231x)} && \scriptsize{(843x)} & \scriptsize{(751x)} & \scriptsize{(680x)} \\
\bottomrule
\end{tabular}
\label{tab:comparison-sparse}
\end{table}

% \begin{table}[t]
% \centering
% \caption{Comparison results by varying social data sparsity on Epinions and Lthing datasets.}
% \begin{tabular}{C{3cm}|c|ccc|ccc}
% \toprule
% \multirow{2}{*}{Metric} & \multirow{2}{*}{Models} & \multicolumn{3}{c}{Epinions} & \multicolumn{3}{|c}{Lthing } \\
% {}  & {} & 0.4 & 0.6 & 0.8 & 0.4 & 0.6 & 0.8 \\
% \midrule
% \multirow{3}{*}{\shortstack{Total time \\(Seconds)}} &   SesoRec   & 7,799  & 7,799  & 7,799 & 16,084  & 16,084   & 16,084 \\
% {} &   \modelname  & 366.3  & 381.2  & 401.8 & 194  & 217   & 238\\
% {} & \scriptsize{(Improvement)} & \scriptsize{(21.29x)} & \scriptsize{(20.46x)} & \scriptsize{(19.41x)} & \scriptsize{(82.91x)} & \scriptsize{(74.12x)} & \scriptsize{(67.58x)} \\
% \midrule
% \multirow{3}{*}{\shortstack{Total communication \\(GB)}} &   SesoRec   & 798  & 798  & 798 & 1,366  & 1,366  & 1,366 \\
% {} &   \modelname  & 3.12  & 3.29  & 3.46 & 1.62  & 1.82   & 2.01\\
% {} & \scriptsize{(Improvement)} & \scriptsize{(255x)} & \scriptsize{(243x)} & \scriptsize{(231x)} & \scriptsize{(843x)} & \scriptsize{(751x)} & \scriptsize{(680x)} \\
% \bottomrule
% \end{tabular}
% \label{tab:comparison-sparse}
% \end{table}

\nosection{Performance Comparison}\label{sec:exp-comp}
We first compare the model performances in terms of accuracy (RMSE) and efficiency (total time and communication). 
Table \ref{tab:comparison-ep} shows the time and communication for each epoch, where time is shown in seconds, and communication is shown in GB.

From those Tables, we find that: (1) the use of social information can indeed improve the recommendation performance of the rating platform, e.g., 1.193 vs. 1.062 and 0.927 vs. 0.098 in terms of RMSE on Epinions and Lthing, respectively. 
This result is consistent with existing work from \cite{ma2011recommender,chen2019secure};
(2) despite the same RMSE as SeSoRec and Soreg, \modelname~significantly improves the efficiency of SeSoRec, especially on the more sparse Lthing dataset, reducing the total time for one epoch from around $4.5$ hours to around $4.5$ minutes, and reducing the total communication from nearly $1.3\mathsf{TB}$ to around $2.2\mathsf{GB}$. 
This yields an improvement of $18.57\times$ faster, and $224.8\times$ less communication on Epinions and $61.37\times$ faster and $620.2\times$ less communication on Lthing, respectively.

\nosection{Effect of Social Data Sparsity}
\label{sec:exp-sparse}
Next, we try to study the effect of social data sparsity on training efficiency. 
In order to do this, we sample the social relation of both datasets with a rate of $0.8$, $0.6$, and $0.4$. 
As the result, the RMSEs of both SeSoRec and \modelname~decrease to $1.0932$, $1.1373$, $1.1751$ on Epinions dataset, and $0.9112$, $0.9187$, $0.9210$ on Lthing dataset. The rational behind is that recommendation performance decreases with the number of social relations.
We also report the efficiency of both models on Epinions and Lthing datasets in Table \ref{tab:comparison-sparse}. 
From it, we can find that the computation time and communication size of SeSoRec are constant no mater what the sample rate is. 
In contrast, the computation time and communication size of \modelname~decrease linearly with sample rate. 
This result benefits from that \modelname~can deal with sparse social data with our proposed sparse matrix multiplication protocols. 

\nosection{Effect of $k$}
\label{sec:exp-k}
For efficiency, we report the running time and communication size of SeSoRec and PriorRec w.r.t $k$ in Table \ref{tab:effect-k-time}, where we use the Epinions dataset. From it, we can get that in average, \modelname~improves SeSoRec \textbf{18.6x} in terms of total running time and \textbf{225x} in terms of communication.
More specifically, we observe that 
(1) the total running time of both SeSoRec and PriorRec increase with $k$, but the increase rate of \modelname~is slower than that of SeSoRec; 
(2) the communication size of SeSoRec increases with $k$, in contrast, the communication size of \modelname~is constant. 
This result demonstrates that our proposed \modelname~has better scalability than SeSoRec in terms of both running time and communication size.

\begin{table}[t!]
\centering
\caption{Effect of $k$ on running time and communication size on Epinions dataset.}
\begin{tabular}{@{}c c ccc c ccc@{}}
\toprule
\multirow{2}{*}{Models} && \multicolumn{3}{c}{SeSoRec} && \multicolumn{3}{c}{\modelname} \\
{}  && $k=10$ & $k=15$ & $k=20$ && $k=10$ & $k=15$ & $k=20$ \\
\cmidrule{0-0}  \cmidrule{3-5} \cmidrule{7-9}
Offline Time  &&  7,271 & 12,651 & 17,676  &&  10.86 & 9.667 & 9.815 \\
Total Time    &&  7,799 & 13,565 & 19,585  &&  419.9 & 449.6 & 527.4 \\
Offline Comm. &&  788.3 & 1,182  & 1,577   &&  0     & 0     & 0.    \\
Total Comm.   &&  798.6 & 1,198  & 1,597   &&  3.552 & 3.552 & 3.552 \\
\bottomrule
\end{tabular}
\label{tab:effect-k-time}
\end{table}

%%%%%%%%%%%%%%%%%%%%%%%%%%%%%%%%%%%%%%%%%%%%%%%%%%%%%%%%%%%%%%%%%%%%%%%%%%%%%%%%
\section{Related Work}
%%%%%%%%%%%%%%%%%%%%%%%%%%%%%%%%%%%%%%%%%%%%%%%%%%%%%%%%%%%%%%%%%%%%%%%%%%%%%%%%

Traditional recommender systems that only consider user-item rating information suffer from severe data sparsity problem \cite{mnih2007probabilistic}.
On the one hand, researchers extensively incorporate other kinds of information, e.g., social \cite{tang2013social}, review \cite{pena2020combining}, location \cite{levandoski2012lars}, and time \cite{chen2013terec}, to further improve recommendation performance.
On the other hand, existing studies begin to explore information on multiple platforms or domains to address the data sparsity problem in recommender systems, i.e., cross-platform and cross-domain recommendation \cite{lin2019cross,zhu2020graphical,cdrsurvey}. 
However, most of them cannot solve the data isolation problem in practice. 

So far, there has been several work that may be applied for privacy-preserving cross-domain recommendations. 
For example, \cite{nikolaenko2013privacy} applied garbled circuits for secure matrix factorization, and it has high security but low efficiency. 
Chai et al. \cite{chai2020secure} adopted homomorphic encryption for federated matrix factorization, but it assumes the existence of a semi-honest server and is not provable secure. 
\cite{gao2019privacy} uses differential privacy to protect user location privacy using transfer learning technique, which is not provable secure and does not suitable to our problem. 
The most similar work to ours is SeSoRec \cite{chen2019secure}, however, it suffers from two main shortcomings: (1) as admitted by SeSoRec, it improves efficiency by sacrificing security. That is, it reveals the sum of two rows or two columns of the input matrix. We emphasis that this raises serious security concern in the social recommendation since one may infer detailed social relations from the element-wise sum of two rows/columns of the user social matrix, especially when social relations are binary values; (2) SeSoRec treats the social data as a dense matrix and thus still has serious efficiency issue under the practical sparse social data setting.

%%%%%%%%%%%%%%%%%%%%%%%%%%%%%%%%%%%%%%%%%%%%%%%%%%%%%%%%%%%%%%%%%%%%%%%%%%%%%%%%
\section{Conclusion}
%%%%%%%%%%%%%%%%%%%%%%%%%%%%%%%%%%%%%%%%%%%%%%%%%%%%%%%%%%%%%%%%%%%%%%%%%%%%%%%%

This paper aims to solve the data isolation problem in cross-platform social recommendation.
To do this, we propose \modelname, a sparsity-aware secure cross-platform social recommendation framework. 
\modelname~conducts social recommendation task and preserves data privacy at the same time. 
We also propose two secure sparse matrix multiplication protocols to improve the model training efficiency. 
Experiments conducted on two datasets demonstrate that \modelname~improves the computation time and communication size by around $40\times$ and $423\times$ on average, compared with the state-of-the-art work. 
% 
%We leave the design of an efficient end-to-end secure social recommendation framework as a future work.

% %%%%%%%%%%%%%%%%%%%%%%%%%%%%%%%%%%%%%%%%%%%%%%%%%%%%%%%%%%%%%%%%%%%%%%%%%%%%%%%%
% \section{Applications}
% %%%%%%%%%%%%%%%%%%%%%%%%%%%%%%%%%%%%%%%%%%%%%%%%%%%%%%%%%%%%%%%%%%%%%%%%%%%%%%%%

% \begin{itemize}
%     \item SVD++
%     \item NN-based Recommendation Model
%     \item Clustering \cite{wang2015incorporating}
% \end{itemize}

% \section{Limitations and Social Impact}
% \label{sec:limitation}

% \nosection{Limitation}
% Our proposed model secure cross-domain social recommendation model is customized to a classic factorization based social recommendation models, i.e., Soreg \cite{ma2011recommender}. 
% %
% We did not choose the state-of-the-art social recommendation model, e.g., deep neural network based model \cite{chen2019efficient}, since it contains complicated non-linear computations and will suffer from serious efficiency problem when involving secure multi-party computation techniques. 

% \nosection{Social impact}
% Our proposed model can be used to build cross-domain social recommendation model while protecting the data privacy of both domains.
% %
% The proposed sparse-aware matrix multiplication protocols can be widely used to other machine learning scenarios besides social recommendation model. 
% %
% This paper could shed light on the future research direction of recommendation area. 
% %
% To our knowledge, our work does not have any potential negative societal impacts. 

%%%%%%%%%%%%%%%%%%%%%%%%%%%%%%%%%%%%%%%%%

\small

\bibliographystyle{plain}
\bibliography{main}

\end{document}